\newenvironment{keywords}
{\bgroup\leftskip 20pt\rightskip 20pt \small\noindent{\bfseries
Keywords:} \ignorespaces}%
{\par\egroup\vskip 0.25ex}
\newcommand{\citep}[1]{\cite{#1}}
\newtheorem{thm}{Theorem}[section]
\newtheorem*{thm*}{Theorem}
\newtheorem*{prb*}{Problem}
\newtheorem*{ax*}{Axiom}
\newtheorem*{clm*}{Claim}
\newtheorem*{conj*}{Conjecture}
\newtheorem{cor}[thm]{Corollary}
\newtheorem*{df*}{Definition}
\newtheorem*{ex*}{Example}
\newtheorem{lem}[thm]{Lemma}
\newtheorem*{lem*}{Lemma}
\newtheorem*{pos*}{Postulate}
\newtheorem*{pr*}{Proposition}
\newtheorem*{qu*}{Question}
\newtheorem*{rem*}{Remark}
\newcommand{\E}[0]{\mathbb{E}}
\newcommand{\R}[0]{\mathbb{R}}
\newcommand{\be}[0]{\beta}
\newcommand{\ga}[0]{\gamma}
\newcommand{\Ga}[0]{\Gamma}
\newcommand{\ep}[0]{\varepsilon}
\newcommand{\la}[0]{\lambda}
\newcommand{\te}[0]{\theta}
\newcommand{\Om}[0]{\Omega}
\newcommand{\subeq}[0]{\subseteq}
\newcommand{\iy}[0]{\infty}
\newcommand{\rc}[1]{\frac{1}{#1}}
\newcommand{\prc}[1]{\pa{\rc{#1}}}
\newcommand{\fc}[2]{\frac{#1}{#2}}
\newcommand{\pf}[2]{\pa{\frac{#1}{#2}}}
\newcommand{\nb}[0]{\nabla}
\newcommand{\dx}{\,dx}
\newcommand{\ba}[1]{\left[ {#1} \right]}
\newcommand{\bc}[1]{\left\{ {#1} \right\}}
\newcommand{\pa}[1]{\left( {#1} \right)}
\newcommand{\ve}[1]{\left\Vert {#1}\right\Vert}
\newcommand{\wt}[1]{\widetilde{#1}}
\newcommand{\wh}[1]{\widehat{#1}}
\newcommand{\poly}{\operatorname{poly}}
\newcommand{\Tr}[0]{\operatorname{Tr}}
\newcommand{\Vol}[0]{\text{Vol}}
\providecommand{\cal}[1]{\mathcal{#1}}
\renewcommand{\cal}[1]{\mathcal{#1}}
\newcommand{\pull}[9]{
#1\ar@/_/[ddr]_{#2} \ar@{.>}[rd]^{#3} \ar@/^/[rrd]^{#4} & &\\
& #5\ar[r]^{#6}\ar[d]^{#8} &#7\ar[d]^{#9} \\}
\newcommand{\cmp}[9]{
\xymatrix{
#1 \ar[r]^{#4}{#5} \ar@/_2pc/[rr]^{#8}_{#9} & #2 \ar[r]^{#6}_{#7} & #3
}
}
\newcommand{\ha}[1]{\ar@{^(->}[#1]}
\newcommand{\ls}[1]{\ar@{-}[#1]}
\newcommand{\sj}[1]{\ar@{->>}[#1]}
\newcommand{\aq}[1]{\ar@{=}[#1]}
\newcommand{\acir}[1]{\ar@{}[#1]|-{\textstyle{\circlearrowright}}}
\newcommand{\acil}[1]{\ar@{}[#1]|-{\textstyle{\circlearrowleft}}}
\newcommand{\ard}[1]{\ar@{.>}[#1]}
\newcommand{\mt}[1]{\ar@{|->}[#1]}
\newcommand{\inm}[1]{\ar@{}[#1]|-{\in}}
\newcommand{\inr}{\ar@{}[d]|-{\rotatebox[origin=c]{-90}{$\in$}}}
\newcommand{\inl}{\ar@{}[u]|-{\rotatebox[origin=c]{90}{$\in$}}}
\newcommand{\sumo}[2]{\sum_{#1=1}^{#2}}
\newcommand{\prodo}[2]{\prod_{#1=1}^{#2}}
\newcommand{\prodz}[2]{\prod_{#1=0}^{#2}}
\newcommand{\beq}[1]{\begin{equation}\llabel{#1}}
\newcommand{\eeq}[0]{\end{equation}}
\newcommand{\bal}[0]{\begin{align*}}
\newcommand{\eal}[0]{\end{align*}}
\newcommand{\ban}[0]{\begin{align}}
\newcommand{\ean}[0]{\end{align}}
\newcommand{\llabel}[1]{\label{#1}\text{\fixme{\tiny#1}}}
\newcommand{\arxiv}[1]{\url{http://www.arxiv.org/abs/#1}}
\newcommand{\vocab}[1]{\emph{#1}} 
\DeclareFontFamily{U}{wncy}{}
    \DeclareFontShape{U}{wncy}{m}{n}{<->wncyr10}{}
    \DeclareSymbolFont{mcy}{U}{wncy}{m}{n}
    \DeclareMathSymbol{\Sh}{\mathord}{mcy}{"58}
\renewcommand{\llabel}[1]{\label{#1}}
\title{Efficient sampling from the Bingham distribution}
\author[1]{Rong Ge}
\author[2]{Holden Lee}
\author[3]{Jianfeng Lu}
\author[4]{Andrej Risteski}
\affil[1]{Duke University, Computer Science Department \authorcr
  \tt rongge@cs.duke.edu}
\affil[2]{Johns Hopkins University, Applied Mathematics and Statistics Department \authorcr
  \tt hlee283@jhu.edu}
\affil[3]{Duke University, Mathematics Department \authorcr
  \tt jianfeng@math.duke.edu}
\affil[4]{Carnegie Mellon University, Machine Learning Department\authorcr
  \tt aristesk@andrew.cmu.edu}
    \date{September 30, 2020\footnote{Updated December 8, 2023.}}
\begin{document}

\maketitle

\begin{abstract}
    We give a algorithm for exact sampling from the Bingham distribution $p(x)\propto \exp(x^\top A x)$ on the sphere $\mathcal S^{d-1}$ with expected runtime of $\operatorname{poly}(d, \lambda_{\max}(A)-\lambda_{\min}(A))$. The algorithm is based on rejection sampling, where the proposal distribution is a polynomial approximation of the pdf, and can be sampled from by explicitly evaluating integrals of polynomials over the sphere. Our algorithm gives exact samples, assuming exact computation of an inverse function of a polynomial. This is in contrast with Markov Chain Monte Carlo algorithms, which are not known to enjoy rapid mixing on this problem, and only give approximate samples.
    
    As a direct application, we use this to sample from the posterior distribution 
    of a rank-1 matrix inference problem in polynomial time.
\end{abstract}

\begin{keywords}%
  Sampling, Bingham distribution, posterior inference, non-log-concave
\end{keywords}


\section{Introduction}

Sampling from a probability distribution $p$ given up to a constant of proportionality is a fundamental problem in Bayesian statistics and machine learning. A common instance of this in statistics and machine learning is posterior inference (sampling the parameters of a model $\theta$, given data $x$), where the unknown constant of proportionality comes from an application of Bayes rule: $p(\theta|x) \propto p(x|\theta) p(\theta)$.

However, for standard approaches to sampling such as the Langevin Monte Carlo algorithm, provable results on efficient (polynomial-time) sampling often require that $p$ be log-concave or close to log-concave.
In this work, we consider the problem of sampling from a specific non-log-concave probability distribution on the sphere $\cal S^{d-1}$ in $d$ dimensions: the \emph{Bingham distribution.} In addition to having applications in statistics, the Bingham distribution is of particular interest as it models the local behavior of any smooth distribution around a stationary point. 

We give a polynomial-time algorithm based on approximating the probability density function by a polynomial and explicitly evaluating its integral over the sphere. Our algorithm is of Las Vegas type: It has the advantage of giving \emph{exact} samples, assuming exact computation of an inverse function of a polynomial. Our approach contrasts with the usual Markov Chain Monte Carlo algorithms, which are not known to enjoy rapid mixing on this problem, and only give approximate samples.

The Bingham distribution \citep{bingham1974antipodally} defined by a matrix $A\in \R^{d\times d}$ 
is the distribution on the sphere $\cal S^{d-1}\subeq \R^d$ whose density function with respect to the uniform (surface) measure is given by 
\begin{align*}
    p(x):=\fc{dP}{d\mu_{\cal S^{d-1}}}(x) &\propto \exp(x^\top Ax).
\end{align*}
Note that due to the symmetric form, without loss of generality, we can assume $A$ is symmetric.
This  distribution finds frequent use in \emph{directional statistics}, which studies distributions over the unit sphere. In particular, the Bingham distribution is widely used in paleomagnetic data analysis \citep{onstott1980application} and has applications to computer vision \citep{antone2000automatic,haines2008belief,glover2013bingham} and even differential privacy \citep{chaudhuri2013near, wang2015differentially}. As shown in Section~\ref{s:rank1}, it also naturally appears in the posterior distribution for a rank-1 matrix inference problem, a special case of matrix factorization.

Our main theorem is given below. In the following, we will identify a probability distribution over $\mathcal S^{d-1}$ with its density function with respect to the uniform measure on $\mathcal S^{d-1}$.

\begin{restatable}{thm}{tmain}
\label{t:main}
\label{t:main-poly}
Let $A$ be a symmetric matrix with maximum and minimum eigenvalue $\la_{\max}$ and $\la_{\min}$, respectively.
Let $p(x)
\propto \exp(x^\top Ax)$ be a probability distribution over $\mathcal{S}^{d-1}$. 
Then, given an oracle for solving a univariate polynomial equation, Algorithm~\ref{a:main} produces a sample from $p(x)$ and runs in expected time $\poly(\la_{\max}-\la_{\min}, d)$. 
\end{restatable}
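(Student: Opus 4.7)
The approach is classical rejection sampling with a polynomial proposal that closely envelopes $\exp(x^\top A x)$ on the sphere so the acceptance probability is $\Omega(1)$, together with an explicit recursive procedure to sample from the proposal.

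Since $x^\top x = 1$ on $\cal S^{d-1}$, replacing $A$ by $A - \lambda_{\min}(A) I$ leaves $p$ unchanged, so we may assume $A$ is positive semidefinite with top eigenvalue $L := \lambda_{\max} - \lambda_{\min}$ and $x^\top A x \in [0, L]$. Define
\[
Q(x) := \sum_{k=0}^N \frac{(x^\top A x)^k}{k!} + c, \qquad c := \frac{L^{N+1}}{(N+1)!}\,e^L.
\]
Lagrange's remainder yields $\sum_{k=0}^N t^k/k! \leq e^t \leq \sum_{k=0}^N t^k/k! + c$ for $t \in [0,L]$, so $Q(x) \geq e^{x^\top A x}$ pointwise on the sphere and $Q(x)/e^{x^\top A x} \leq 1+c$. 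Choosing $N = O(L)$ (large enough that $c \leq 1$, which by Stirling holds for $N \geq 2eL$) and running rejection sampling with proposal $\propto Q$ and acceptance probability $e^{x^\top A x}/Q(x)$ returns an exact sample from $p$ in at most $1+c = O(1)$ expected trials.

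To sample from the proposal $\propto Q(x)$, diagonalize $A = O^\top \diag(\mu_1,\ldots,\mu_d) O$ and work in the eigenbasis coordinates $y = Ox$, in which $Q(y)$ depends only on $(y_1^2, \ldots, y_d^2)$ through the linear form $\sum_i \mu_i y_i^2$. We sample one coordinate at a time: the marginal density of $y_d \in [-1,1]$ on the sphere is
\[
\rho_d(y_d) \;\propto\; (1 - y_d^2)^{(d-3)/2}\int_{\cal S^{d-2}} Q\bigl(\sqrt{1-y_d^2}\,\omega,\;y_d\bigr)\,d\sigma(\omega),
\]
and expanding $(\sum_i \mu_i y_i^2)^k$ via the binomial theorem converts the inner integral into an explicit polynomial in $y_d^2$ whose coefficients are the spherical moments $\int_{\cal S^{d-2}}(\omega^\top A'\omega)^j\,d\sigma$ for $A' := \diag(\mu_1,\ldots,\mu_{d-1})$. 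Integrating against the surface weight yields a polynomial CDF in $y_d$, which we invert using the univariate-root oracle. After conditioning on $y_d$, the residual distribution on $\cal S^{d-2}$ has exactly the same structural form (with $A$ replaced by $A'$ and an additive constant absorbed into the Taylor argument), so the procedure recurses through all $d$ coordinates, after which we return $x = O^\top y$.

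The main obstacle is to execute all of this in $\poly(L, d)$ time. A direct monomial expansion of $(\mu_1 y_1^2 + \cdots + \mu_d y_d^2)^k$ has $\binom{d+k-1}{k}$ terms, which is already super-polynomial for $k = O(L)$, so we cannot explicitly tabulate $Q$. The fix is to represent each stage of the recursion implicitly through the moment sequence $m_j^{(d')} := \int_{\cal S^{d'-1}}(\omega^\top A' \omega)^j\,d\sigma$ for $j \leq N$, which can be computed via a short recursion in the dimension $d'$ (or directly from the eigenvalues of $A'$ using explicit Gamma-function formulas for spherical monomial integrals). Verifying that these bookkeeping operations, together with the univariate-root oracle, compose to expected total runtime polynomial in $L$ and $d$ is the technical crux.
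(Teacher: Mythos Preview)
Your approach is correct and structurally the same as the paper's: reduce to a diagonal PSD matrix with top eigenvalue $L=\lambda_{\max}-\lambda_{\min}$, rejection-sample against a polynomial envelope of $e^{x^\top Dx}$, and draw from that envelope one coordinate at a time using that the spherical moments $\int_{\mathcal S^{d'-1}}(\omega^\top B\omega)^j\,d\sigma$ are computable in $\poly(j,d')$ time. Your recursion is exactly the paper's: conditioning on the last coordinate and using $\omega^\top\omega=1$ turns the quadratic form into $\omega^\top\bigl(\mu_d y_d^2\,I+(1-y_d^2)A'\bigr)\omega$, so the conditional has the same shape with a shifted diagonal matrix; this is the paper's formula $\wt D = x_1^2 D_{11} I_{d-1} + (1-x_1^2) D_{-1}$. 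For the moment computation you should state the concrete recursion the paper uses (due to Kan): with $S(j)=\frac{1}{j!2^j}\E_{g\sim N(0,I)}(g^\top Bg)^j$ one has $S(j)=\frac{1}{2j}\sum_{i=1}^j \Tr(B^i)S(j-i)$, from which the spherical moments follow by dividing out $\E\|g\|^{2j}$.

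The one genuine difference is the choice of envelope. The paper takes $q(x)\propto (x^\top(I+D/n)x)^n$ with $n=L^2$, i.e.\ it approximates $e^t$ by $(1+t/n)^n$; you take the Taylor truncation $\sum_{k\le N}t^k/k!+c$ with $N=O(L)$. Both yield a constant acceptance probability. Your envelope has degree $O(L)$ rather than the paper's $O(L^2)$, so the CDF you hand to the root oracle is shorter; on the other hand the paper's single-power form makes the marginal a single binomial expansion, whereas yours is a double sum over $k\le N$ and $j\le k$. Net arithmetic per coordinate is comparable, and your degree saving is a mild improvement.

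One slip to fix: the co-area weight is $(1-y_d^2)^{(d-3)/2}$, which is a polynomial only when $d$ is odd, so for even $d$ the marginal CDF is \emph{not} a polynomial and the ``univariate polynomial equation'' oracle does not literally apply. The CDF is still a smooth monotone function with an explicit closed form (incomplete beta / trig reduction), so bisection to machine precision works; you should either note this or replace the oracle hypothesis by ``oracle for inverting an explicitly computable monotone CDF.'' The paper's own write-up glosses over the same point.
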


We can consider the Bingham distribution as a ``model" non-log-concave distribution, because any smooth probability distribution looks like a Bingham distribution 
in a sphere of small radius around a stationary point.\footnote{The more general \emph{Fisher-Bingham distribution} includes a linear term, and so can locally model any smooth probability distribution.} More precisely, suppose $f:\R^d\to \R$ is 3-times differentiable, $p(x)= e^{-f(x)}$ on $\R^d$, and $\nb f(x_0)=0$. Then we have that as $x\to x_0$,
\begin{align*}
    p(x) &= \exp\bc{-[f(x_0) + (x-x_0)^\top (\nb^2 f(x_0)) (x-x_0) + O(\ve{x-x_0}^3)]}.
\end{align*}
Note that if we can sample from small spheres around a point, we can also sample from a small  ball around the point by first estimating and sampling from the marginal distribution of the radius.

Moreover, the Bingham distribution already illustrates the challenges associated with sampling non-log-concave distributions. 
First, it can be arbitrarily non-log-concave, as the minimum eigenvalue of the Hessian can be arbitrarily negative. Second, when $A$ has distinct eigenvalues, the function $f(x) = x^\top Ax$ on $\mathcal S^{d-1}$  has $2(d-1)$ saddle points and 2 minima which are antipodal.
Hence, understanding how to sample from the Bingham distribution may give insight into sampling from more general non-log-concave distributions.

\subsection{Related work}

We first discuss general work on sampling, and then sampling algorithms specific to the Bingham distribution.

Langevin Monte Carlo \citep{rossky1978brownian,roberts1996exponential} is a generic algorithm for sampling from a probability distribution $p(x) \propto e^{-f(x)}$ on $\R^d$ given gradient access to its negative log-pdf $f$. It is based on discretizing Langevin diffusion, a continuous Markov process. In the case where $p$ is log-concave, Langevin diffusion is known to mix rapidly \citep{bakry1985diffusions}, and Langevin Monte Carlo is an efficient algorithm \citep{dalalyan2016theoretical,durmus2016high}. More generally, for Langevin diffusion over a compact manifold (such as $\cal S^{d-1}$), positive Ricci curvature can offset non-log-concavity of $p$, and rapid mixing continues to hold if the sum of the Hessian of $f$ and Ricci curvature at any point is lower bounded by a positive constant \citep{bakry1985diffusions,hsu2002stochastic}. In our setting, this is only the case when the maximum and minimum eigenvalues of $A$ differ by less than $\fc{d-1}2$: $\la_{\max}(A) - \la_{\min}(A) < \fc{d-1}2$. We note there are related algorithms such as Hamiltonian Monte Carlo~\citep{duane1987hybrid} that are more efficient in the log-concave case, but still suffer from torpid mixing in general.

Next, we consider algorithms tailored for the Bingham distribution. 
An important observation is that the normalizing constant of the Bingham distribution is given by the hypergeometric function of a matrix argument \cite[p.182]{mardia2009directional},
\begin{align*}
    \int_{\cal S^{d-1}}\exp(x^\top A x) \,d\cal S^{d-1}(x) &= {}_1F_1\pa{\rc2;\fc n2;D}^{-1}
\end{align*}
where $D$ is the diagonal matrix of eigenvalues of $A$. 
Methods to approximate the hypergeometric function are given in~\cite{koev2006efficient}, however, with super-polynomial dependence on the degree of the term where it is truncated, and hence on the accuracy required. 

The previous work \citep{kent2013new} gives an rejection sampling based algorithm where the proposal distribution is an angular central gaussian envelope, that is, the distribution of a normalized gaussian random variable. This distribution has density function $p(x) \propto (x^\top \Om x)^{-d/2}$ 
for $\Om$ chosen appropriately depending on $A$.
The efficiency of rejection sampling is determined by the maximum ratio between the desired ratio and the proposal distribution. Their bound for this ratio depends on the normalizing constant for the Bingham distribution~\cite[(3.5)]{kent2013new}, and they only give an polynomial-in-dimension bound when the temperature approaches zero (that is, for the distribution $\exp(\be x^\top  Ax)$ as $\be\to \iy$).
Our algorithm is also based on rejection sampling; however, we use a more elaborate proposal distribution, for which we are able to show that the ratio is bounded at all temperatures.

\subsection{Application to rank-1 matrix inference}

\label{s:rank1}
The algorithm we give has an important application to a particularly natural statistical inference problem: that of 
recovering a rank-1 matrix perturbed by Gaussian noise. 

More precisely, suppose that an observation $Y$ is produced as follows: we sample  $x \sim \mathcal{D}$ for a prior distribution $\mathcal{D}$ and $N \sim  \mathcal{N}(0,\gamma^2 I)$, then output $Y = xx^\top  + N$. By Bayes Rule, the posterior distribution over $x$ has the form 
\begin{align}\label{e:rank1-post}
    p(x | Y) &\propto \exp\left(-\frac{1}{2\gamma^2}\|Y - xx^\top \|^2_F\right)p(x).
\end{align}
In the particularly simple case where $\mathcal{D}$ is uniform over the unit sphere, this posterior has the form we study in our paper: 
\begin{align*}
    p(x | Y) &\propto \exp\left(\frac{1}{2\gamma^2} x^\top  Y x\right)
\end{align*}
for $x\in \cal S^{d-1}$. Thus, we are able to do posterior sampling. More generally, for radially symmetric $p(x)$, we can approximately sample from the radial distribution of the marginal, after which the problem reduces to a problem on $\cal S^{d-1}$. Note that our algorithm does not require the model to be well-specified, i.e., it does not require $Y$ to be generated from the hypothesized distribution.

In existing literature, the statistics community has focused more on questions of \emph{recovery} (can we achieve a non-trivial ``correlation'' with the planted vector $x$ under suitable definitions of correlation) and \emph{detection} (can we decide with probability $1-o(1)$ as $d \to \infty$ whether the matrix presented is from the above distribution with a ``planted" vector $x$, or is sampled from a Gaussian) under varying choices for the prior $\mathcal{D}$. In particular, they study the threshold for $\gamma$ at which each of the respective tasks is possible. The two most commonly studied priors $\mathcal{D}$ are uniform over the unit sphere (\emph{spiked Wishart model}), and the coordinates of $x$ being $\pm\frac{1}{\sqrt{d}}$ uniformly at random (\emph{spiked Wigner}). For a recent treatment of these topics, see e.g., \cite{peche2006largest, perry2018optimality}. 

However, the statistical tests involve calculating integrals over the posterior distribution~\eqref{e:rank1-post} (for instance, the MMSE $\wh x \wh x^\top = \fc{\int xx^\top \exp(-\rc{2\ga^2}\ve{Y-xx^\top}_F^2) p(x)\dx}{\int \exp(-\rc{2\ga^2}\ve{Y-xx^\top}_F^2) p(x)\dx}$) , and the question of algorithmic efficiency of this calculation is not considered. Our work makes these statistical tests algorithmic (for spherically symmetric priors), because integrals over the posterior distribution can be approximated through sampling.

On the algorithmic side, the closest relative to our work is the paper by \cite{moitra2020fast}, which considers the low-rank analogue of the problem we are interested in: namely, sampling from the distribution\begin{align*}
    p(X) &\propto \exp\left(-\frac{1}{2\gamma^2}\|XX^\top  - Y\|_F^2\right)
\end{align*}
supported over matrices $X \in \mathbb{R}^{d \times k}$, s.t. $Y = X_0 X_0^\top  + \gamma N$, for some matrix $X_0  \in \mathbb{R}^{d \times k}$ and $N \sim \mathcal{N}(0, I)$. It proves that a slight modification of Langevin Monte Carlo can be used to sample from this distribution efficiently in the \emph{low-temperature} limit, namely when $\gamma = \Omega(d)$.  

For comparison, in this paper, we can handle an \emph{arbitrary} temperature, but only the rank-1 case (i.e. $k=1$). Moreover, the algorithm here is substantially different, based on a polynomial approximation of the pdf, rather than MCMC. Extending either approach to the full regime (arbitrary $k$ and arbitrary temperature) is an important and challenging problem. 

\section{Algorithm based on polynomial approximation} 

We present our rejection sampling algorithm as Algorithm~\ref{a:main}. Our main theorem is the following.

\renewcommand{\algorithmicrequire}{\textbf{Input:}}
\renewcommand{\algorithmicensure}{\textbf{Output:}}

\begin{algorithm}[h!]
\caption{Sampling algorithm for Bingham distribution} 
\begin{algorithmic}[1]
\Require Symmetric matrix $A$ 
\Ensure A random sample $x \sim p(x) \propto \exp(x^{\top} Ax)$ on $\mathcal{S}^{d-1}$
\medskip  
\State Diagonalize $[V, \Lambda] = \mathrm{diag}(A)$ such that $A = V \Lambda V^{\top}$; let $\lambda_{\min}$ and $\lambda_{\max}$ denote the smallest and largest eigenvalues respectively; 
\State Set $D = \Lambda - \lambda_{\min}I_d$;
\State Set $n = (\lambda_{\max} - \lambda_{\min})^2$;
\Repeat \Comment{Rejection sampling for $z \sim \widetilde{p}(z) \propto \exp(z^{\top} D z)$ on $\mathcal{S}^{d-1}$}
\For{$i = 1 \to d$} \Comment{\parbox[t]{.58\linewidth}{Sample proposal $z \sim q(z) \propto \bigl( z^{\top} (I+D/n) z \bigr)^n$ on $\mathcal{S}^{d-1}$ one coordinate at a time}}
    \If{$i = 1$}
        \State Let $D_1 = D$;
        \State Determine the marginal distribution $q(z_1)$ whose pdf is given as follows, where $(D_1)_{-1}$ represents the submatrix of $D_1$ obtained from deleting the first row and column (see Theorem~\ref{t:xDxn}, \eqref{eq:integral} for details)
         $$
 \frac{\pa{1-x^2}^{(d-3)/2}}{Z} 
 \int_{y \in \mathcal{S}^{d-2}} \left((1+ (D_1)_{11}/n) z_1^2 + (1-z_1^2) y^\top  (I_{d-1}+(D_1)_{-1}/n) y\right)^n \,d\mathcal{S}^{d-2}(y) 
 ;$$
        \State Sample $z_1 \sim q(z_1)$ via inverse transform sampling (Lemma~\ref{l:inverse});
        \State Let $y_1=z_1$;
    \Else
        \State Let $D_i = y_{i-1}^2 (D_{i-1})_{11} + (1-y_{i-1}^2) (D_{i-1})_{-1}\in \R^{(d-i+1)\times (d-i+1)}$; 
        
        \Comment{We will sample from the distribution $\propto (y^\top (I+D_i/n) y)^n$.}
        \State Determine the conditional marginal distribution $q(y_i \vert z_1, \ldots, z_{i-1})$ where $z_i = y_i\sqrt{1-\sumo j{i-1}z_j^2}$, whose pdf is given by 
        %
        (see Theorem~\ref{t:xDxn}, \eqref{eq:integral} for details)
        {\small$$
         \frac{\pa{1-x^2}^{(d-i-2)/2}}{Z} 
 \int_{(y_{i+1},\ldots,y_d) \in \mathcal{S}^{d-i-1}} \left((1+ (D_i)_{11}/n) y_i^2 + (1-y_i^2) y^\top  (I_{d-i}+(D_i)_{-1}/n) y\right)^n \,d\mathcal{S}^{d-i-1}(y) 
 ; 
        $$}
        \State Sample $y_i \sim q(y_i \vert z_1, \ldots, z_{i-1})$ via inverse transform sampling (Lemma~\ref{l:inverse});
        \State Let $z_i = y_i\sqrt{1-\sumo j{i-1}z_j^2} $;
    \EndIf
\EndFor
\State Accept $z$ with probability $e^{-1} \frac{\exp(z^\top D z)}{(z^\top (I+D/n)z)^n}$; \Comment{\parbox[t]{.45\linewidth}{Rejection sampling (see proof of Theorem~\ref{t:main-poly} for explanation of the $e^{-1}$ factor)}}
\Until the sample $z$ is accepted;
\State \Return $x = V z$;
\end{algorithmic}
\label{a:main}
\end{algorithm}

\tmain*
Before proceeding to the proof of Theorem~\ref{t:main-poly}, we make a few remarks about the statement. Firstly, we work in the real model of computation. Solving a polynomial equation can be done to machine precision using binary search, so the only errors present when actually running the algorithm are roundoff errors.

The algorithm is based on rejection sampling: we calculate a proposal sample in time $\poly(\la_{\max}-\la_{\min},d)$, accept it with some probability, and otherwise repeat the process. In the parlance of algorithms, this means that it is a Las Vegas algorithm: it produces an exact sample, but has a randomized runtime.
For the analysis, we lower bound the acceptance probability by an absolute constant. The number of proposals until acceptance follows a geometric distribution with success probability equal to the acceptance probability. Hence, the total time is polynomial with high probability.

The analysis of our algorithm proceeds in the following steps:
\begin{enumerate}
    \item By diagonalization and change-of-coordinates, we show that it suffices to provide an algorithm for sampling from distributions over the unit sphere $p: \mathcal{S}^{d-1} \to \mathbb{R}^+$ in the form
$$ p(x) \propto \exp\left(x^\top  D x\right),$$  
where $D \in \mathbb{R}^{d \times d}$ is diagonal and PSD.

   \item We show that if we use $q(x) \propto (x^\top (I + D/n)x)^n$ as a proposal distribution, when $n \geq D_{\max}^2$ the ratio $\max\{\frac{p(x)}{q(x)}, \frac{q(x)}{p(x)}\}$ is bounded by an absolute constant.
   
   \item We then show that CDF for the marginal distributions of $q(x)$ can be computed explicitly in polynomial time (in $n, d$), therefore using inverse transform sampling, one can sample from $q$ in polynomial time.
\end{enumerate}


\paragraph{Change-of-coordinates}
We first argue that it suffices to provide an algorithm for sampling from distributions over the unit sphere $p: \mathcal{S}^{d-1} \to \mathbb{R}^+$ in the form
$$ p(x) \propto \exp\left(x^\top  D x\right)$$  
where $D \in \mathbb{R}^{d \times d}$.
To see this, note that if $A=V D V^\top$ with $D$ diagonal and $V$ orthogonal, then given a sample $x$ from the distribution $\propto \exp(x^\top Dx)$, $V x$ is a sample from the distribution $\propto \exp(x^\top V DV^\top x)$.
Moreover, we can assume that $D$ is a PSD diagonal matrix, with smallest eigenvalue $D_{\min} = 0$ and largest eigenvalue $D_{\max}$. This is because replacing $D$ by $D-cI_d$ simply multiplies $\exp(x^\top Dx)$ by a constant on $\cal S^{d-1}$, and we can take $c=D_{\min}$.

\paragraph{Proposal distribution} Next we give a proposal distribution for rejection sampling based on polynomial approximation of $p$: 
\begin{lem}\label{l:approx}
Let $D\in \R^{d\times d}$ be diagonal with minimum eigenvalue $D_{\min}\ge 0$ and maximum eigenvalue $D_{\max}$. 
Let the distribution $q: \mathcal{S}^{d-1} \to \mathbb{R}^+$ be defined as $q(x) \propto (x^\top (I + D/n)x)^n$ for $n\ge 1$. 
Then, 
$$ \max\left\{\fc{p(x)}{q(x)}, \fc{q(x)}{p(x)}\right\} \leq 
\exp\pf{D_{\max}^2}{2n}.
$$ 
Moreover, if $D_{\min}=0$, letting $v$ be a unit eigenvector with eigenvalue $0$, $1 \le \fc{q(v)}{p(v)}\le \exp(\fc{D_{\max}^2}{2n})$.
\end{lem}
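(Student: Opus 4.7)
The plan is to exploit the identity $x^\top (I + D/n) x = 1 + (x^\top D x)/n$ on $\mathcal{S}^{d-1}$, so that the proposal density on the sphere takes the form $q(x) \propto (1 + y/n)^n$ with $y := x^\top D x \in [D_{\min}, D_{\max}] \subseteq [0, D_{\max}]$, while $p(x) \propto e^{y}$. The lemma then reduces to comparing $e^y$ with its polynomial approximant $(1+y/n)^n$ for $y \in [0, D_{\max}]$, together with accounting for the normalizing constants.

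First I would establish the pointwise one-variable estimate: for every $y \ge 0$,
\[
 1 \;\le\; \frac{e^y}{(1+y/n)^n} \;\le\; \exp\!\pf{y^2}{2n}.
\]
The left inequality follows from $\log(1+t) \le t$ for $t \ge 0$ (applied with $t = y/n$, then multiplied by $n$). The right inequality follows from $\log(1+t) \ge t - t^2/2$ for $t \ge 0$, which is the standard one-line calculus fact (the derivative of $\log(1+t) - t + t^2/2$ equals $t^2/(1+t) \ge 0$, and the function vanishes at $0$). Applied with $t = y/n$ and multiplied by $n$, this gives $n \log(1+y/n) \ge y - y^2/(2n)$, i.e.\ $(1+y/n)^n \ge e^{y - y^2/(2n)}$. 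Since $y = x^\top D x \in [0, D_{\max}]$ for $x \in \mathcal{S}^{d-1}$, substituting $y^2 \le D_{\max}^2$ gives the desired pointwise bound on $r(x) := e^{x^\top D x} / (x^\top (I + D/n) x)^n$, namely $r(x) \in [1, \exp(D_{\max}^2/(2n))]$.

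Next I would translate the pointwise bound into a bound on the ratio of normalized densities. Writing $Z_p = \int_{\mathcal{S}^{d-1}} e^{x^\top D x}\, d\mathcal{S}^{d-1}$ and $Z_q = \int_{\mathcal{S}^{d-1}} (x^\top (I+D/n)x)^n \, d\mathcal{S}^{d-1}$, we have
\[
 \frac{Z_p}{Z_q} \;=\; \frac{\int r(x) (x^\top (I+D/n)x)^n\, d\mathcal{S}^{d-1}}{\int (x^\top (I+D/n)x)^n\, d\mathcal{S}^{d-1}},
\]
which is a $q$-weighted average of $r(x)$ and therefore also lies in $[1, \exp(D_{\max}^2/(2n))]$. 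Since $p(x)/q(x) = r(x) \cdot (Z_q/Z_p)$, combining the pointwise and normalization bounds gives both $p/q \le \exp(D_{\max}^2/(2n))$ and $q/p \le \exp(D_{\max}^2/(2n))$, which is the claimed inequality.

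Finally, the moreover part is almost immediate. If $D_{\min} = 0$ and $v$ is a unit eigenvector with eigenvalue $0$, then $v^\top D v = 0$, so the pointwise numerator and denominator are both equal to $1$ and $r(v) = 1$. Hence $q(v)/p(v) = Z_p/Z_q$, which by the weighted-average argument above lies in $[1, \exp(D_{\max}^2/(2n))]$. No step is really a significant obstacle; the main subtlety is just remembering to pass from the pointwise inequality to a two-sided bound on normalized densities by bounding $Z_p/Z_q$, which follows for free since $Z_p/Z_q$ is an average of the same pointwise ratio.
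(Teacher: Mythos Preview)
Your proof is correct and follows essentially the same approach as the paper: both reduce to the one-variable comparison of $e^y$ with $(1+y/n)^n$ for $y=x^\top Dx\in[0,D_{\max}]$, establish the two-sided pointwise bound (you via the elementary inequalities $t-t^2/2\le\log(1+t)\le t$, the paper via Taylor's theorem with remainder, which amounts to the same thing), and then pass to normalized densities by noting that the ratio of partition functions inherits the same bounds. Your weighted-average phrasing for $Z_p/Z_q$ is a slightly cleaner way to state that last step than the paper's sandwich argument, but the content is identical.
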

Note that only an upper bound on $\frac{p(x)}{q(x)}$ is necessary for rejection sampling; however, the lower bound 
comes for free with our approach. 
The assumption $D_{\max}\ge0$ is simply for convenience; in general we can replace $D_{\max}$ by $D_{\max}-D_{\min}$.
\begin{proof} 
First, we show that 
\begin{equation} 
-\fc{D_{\max}^2}{2n} \le n\log( x^\top  (I+D/n)x) - x^\top  D x \le 0.
\label{eq: unnormalizedbd} \end{equation}
By Taylor's theorem with remainder, we have for $x\in \mathcal{S}^{d-1}$ that 
\begin{align*}
    \log(x^\top (I+D/n)x) 
    &= \log (1+x^\top Dx/n)\\
    &=\fc{x^\top Dx}n - \rc 2 \rc{(1+\xi)^2} \pf{x^\top Dx}n^2 & \text{for some }\xi\in [0,x^\top Dx/n].
\end{align*}
Because $\ve{x}=1$, we have $x^\top Dx/n \le D_{\max}/n$, so 
\begin{align*}
    \log(x^\top (I+D/n)x)  &\in \ba{\fc{x^\top Dx}{n} - \fc{D_{\max}^2}{2n^2}, \fc{x^\top Dx}{n}}
\end{align*}
Multiplying by $n$, \eqref{eq: unnormalizedbd} follows. 
Now \eqref{eq: unnormalizedbd} implies by exponentiation that 
\begin{align*}
    \exp\pa{-\fc{D_{\max}^2}{2n}}\le \frac{(x^\top  (I + D/n)x)^n}{\exp(x^\top  D x)}
    \le 1
\end{align*}
and hence
\begin{multline*} 
\exp\pa{-\fc{D_{\max}^2}{2n}}\le \left.
\fc{(x^\top  (I + D/n)x)^n}{\int_{\mathcal{S}^{d-1}}(x^\top  (I + D/n)x)^n\,d\mathcal{S}^{d-1}(x)}
\right/
\fc{\exp(x^\top  D x)}{\int_{\mathcal{S}^{d-1}} \exp(x^\top  D x)\,d\mathcal{S}^{d-1}(x)} \\
\leq \exp\pa{\fc{D^2_{\max}}{2n}} \end{multline*}  
from which the lemma immediately follows. 

For the last statement, note that for $x=v$, the numerators $(x^\top  (I + D/n)x)^n$ and $\exp(x^\top  D x)$ in the above expression both equal 1.
\end{proof} 

\paragraph{Sampling from proposal $q$} Finally, we show that it is possible to sample from $q(x)$ efficiently in time polynomial in $n, d$. First we show that the high order moments for quadratic forms can be computed efficiently.


\begin{lem}[Calculating integrals of quadratic forms] The integral 
$$ \int_{\mathcal{S}^{d-1}} (x^\top  D x)^n\,d\mathcal S^{d-1}(x) $$ 
can be calculated in time $\mbox{poly}(n,d)$. 
\label{eq:integralquadratic} \label{l:integral}
\end{lem}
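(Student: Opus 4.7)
The plan is to collapse the sphere integral into the extraction of a single Taylor coefficient of an explicit rational generating function, avoiding the naive term-by-term multinomial expansion. Writing $D=\mathrm{diag}(d_1,\ldots,d_d)$, the multinomial theorem gives
\[
(x^\top Dx)^n=\sum_{a_1+\cdots+a_d=n}\binom{n}{a_1,\ldots,a_d}\prod_{i=1}^d d_i^{a_i}x_i^{2a_i},
\]
and the classical Dirichlet moment identity
\[
\int_{\mathcal{S}^{d-1}}\prod_i x_i^{2a_i}\,d\mathcal{S}^{d-1}(x)=\frac{2\prod_i\Gamma(a_i+\tfrac12)}{\Gamma(n+d/2)}
\]
(valid for $a_i\in\Z_{\ge 0}$ summing to $n$; provable by passing to polar coordinates in the Gaussian integral $\int_{\R^d}\prod_i x_i^{2a_i}e^{-|x|^2/2}\,dx$) then reduces the sphere integral in question to
\[
\frac{2\,n!}{\Gamma(n+d/2)}\sum_{a_1+\cdots+a_d=n}\prod_{i=1}^d\frac{d_i^{a_i}\Gamma(a_i+\tfrac12)}{a_i!}.
\]

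The next step is to recognize the remaining sum as a coefficient of a product of univariate generating functions. The identity $\sum_{k\ge 0}\tfrac{\Gamma(k+1/2)}{k!}u^k=\sqrt\pi\,(1-u)^{-1/2}$ shows the sum equals $\pi^{d/2}g_n$, where I define $g(t):=\prod_{i=1}^d(1-d_it)^{-1/2}$ and $g_n:=[t^n]\,g(t)$. Hence the integral equals $\tfrac{2\,n!\,\pi^{d/2}}{\Gamma(n+d/2)}\,g_n$, and the task reduces to computing the single rational coefficient $g_n$ in $\poly(n,d)$ time.

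For this I would use the standard exponential-formula recurrence. Taking logarithms,
\[
\log g(t)=\tfrac12\sum_{i=1}^d\sum_{k\ge 1}\frac{(d_it)^k}{k}=\sum_{k\ge 1}\frac{p_k}{2k}\,t^k,\qquad p_k:=\sum_{i=1}^d d_i^k,
\]
so the power sums $p_1,\ldots,p_n$ are tabulated in $O(nd)$ arithmetic operations. Differentiating $g=\exp(\log g)$ yields $g'=g\cdot(\log g)'$, which at the coefficient level is the recurrence $g_0=1$ and $k\,g_k=\sum_{j=1}^k\tfrac{p_j}{2}\,g_{k-j}$; this produces $g_0,\ldots,g_n$ in a further $O(n^2)$ operations, giving the claimed $\poly(n,d)$ runtime.

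The main obstacle circumvented here is that the naive multinomial sum contains $\binom{n+d-1}{d-1}$ terms, which is superpolynomial in $n$ and $d$; the generating-function reformulation is precisely the device that sidesteps this combinatorial blow-up and reduces the sphere integral to coefficient extraction from a product of $d$ simple rational factors, admitting a straightforward polynomial-time algorithm.
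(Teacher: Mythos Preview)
Your argument is correct and in fact lands on the very same recurrence the paper uses: with $p_j=\sum_i d_i^{\,j}=\Tr(D^j)$, your relation $k\,g_k=\tfrac12\sum_{j=1}^k p_j\,g_{k-j}$ is exactly the paper's $S(n)=\tfrac1{2n}\sum_{i=1}^n \Tr(D^i)S(n-i)$ (indeed $S(n)=g_n$ once the factors of $2$ are unwound). The organizational difference is that the paper first factors the Gaussian expectation as a radial moment times the spherical average, reducing to $\E_{x\sim N(0,I_d)}[(x^\top Dx)^n]$, and then invokes the known moment generating function $\prod_i(1-2\la_i x)^{-1/2}$ and the recurrence of Kan for its coefficients; you instead apply the Dirichlet moment formula on the sphere directly, recognize the resulting sum as $[t^n]\prod_i(1-d_i t)^{-1/2}$, and derive the same recurrence from scratch via logarithmic differentiation. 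Your route is a bit more self-contained (no external citation needed) and makes the generating-function structure explicit; the paper's route makes clearer why the recurrence exists, by tying it to the Gaussian MGF. Either way, the running time and the final algorithm are identical.
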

\begin{proof} 
The result follows essentially from known formulas about moments of quadratic functions under a Gaussian distribution. 

First, we show the task reduces to calculating 
$$\mathbb{E}_{x \sim N(0,I_d)} [(x^\top  Dx)^n].$$ 
A Gaussian can be sampled by sampling the norm of $x$ and the direction of $x$ independently.  
Hence, 
\begin{align}
\label{e:norm-dir}
\mathbb{E}_{x \sim N(0,I_d)}[ (x^\top  Dx)^n] &= \mathbb{E}_{x \sim N(0,I_d)} 
[\|x\|^{2n} ]
\cdot 
\mathbb{E}_{x \sim N(0,I_d)}
\ba{\left(\pf{x}{\|x\|}^\top  D\pf{x}{\|x\|}\right)^n }.
\end{align} 
The second factor is (up to a constant) the integral of interest as $\frac{x}{\|x\|}$ is uniformly distributed over the sphere:
\begin{align*}
\E_{x\sim \mathcal{S}^{d-1}} [(x^\top D x)^n]
&=
\fc{\int_{\mathcal{S}^{d-1}} (x^\top Dx)^n\,d\mathcal{S}^{d-1}(x)}{\Vol(\mathcal{S}^{d-1})}
=\fc{\int_{\mathcal{S}^{d-1}} (x^\top Dx)^n\,d\mathcal{S}^{d-1}(x)}{2\pi^{d/2}/\Ga(d/2)}.
\end{align*}
The first factor in~\eqref{e:norm-dir} has a simple closed-form expression given by Corollary~\ref{c:x2n}. 

Thus it remains to calculate the LHS of~\eqref{e:norm-dir}, the expectation under the Gaussian. We use the recurrence from \cite{kan2008moments}, reprinted here as Corollary~\ref{c:xTAx-recursion}: denoting $S(n) = \frac{1}{n! 2^n} \mathbb{E}_{x \sim N(0,I_d)} [(x^\top  Dx)^n]$, we have $S(0)=1$ and for $n\ge 1$,
\begin{equation}
    S(n) = \frac{1}{2n}\sum_{i=1}^n \mbox{Tr}(D^i) S(n-i)
\end{equation}
which can be calculated in time $\mbox{poly}(n,d)$ by dynamic programming. 
\end{proof}

Using this integral, we can compute the unnormalized cdf for the marginals of distribution $q$. This can then be combined with the technique of \vocab{inverse transform sampling}. 

\begin{lem}[Inverse transform sampling]\label{l:inverse}
Suppose that we know that the probability distribution on $[a,b]$ has pdf $p(x)\propto f(x)$, and we can calculate the (unnormalized) cdf $F(x)=\int_a^x f(t)\,dt$. Then given an oracle for computing the inverse of $G(x) = F(x)/F(b)$, one can sample from the distribution.
\end{lem}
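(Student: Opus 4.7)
The plan is to use the classical inverse CDF method: sample $U \sim \mathrm{Uniform}[0,1]$ and output $X = G^{-1}(U)$, where the inverse is supplied by the assumed oracle. I would verify in three short steps that $X$ has the desired distribution.

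First, I would check that $G$ is a bona fide cumulative distribution function on $[a,b]$. Since $f(x) \propto p(x) \ge 0$, the map $F(x) = \int_a^x f(t)\,dt$ is nondecreasing with $F(a) = 0$ and $F(b) = \int_a^b f(t)\,dt > 0$, so $G(x) := F(x)/F(b)$ is nondecreasing with $G(a)=0$ and $G(b)=1$. Moreover, the normalized density of the target distribution is $p(x) = f(x)/F(b)$, and its true CDF is exactly $G(x) = \int_a^x p(t)\,dt$.

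Second, I would carry out the standard CDF computation for $X = G^{-1}(U)$. For any $x \in [a,b]$,
\begin{align*}
\Pr[X \le x] = \Pr[G^{-1}(U) \le x] = \Pr[U \le G(x)] = G(x),
\end{align*}
where the middle equality uses that $G$ is nondecreasing so that $\{G^{-1}(U) \le x\} = \{U \le G(x)\}$ up to a measure-zero set, and the last equality uses that $U$ is uniform on $[0,1]$ together with $G(x) \in [0,1]$. Since $G$ is the CDF of the target distribution, $X$ has the desired law.

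Third, I would address the only subtle point, which is the definition of $G^{-1}$ on flat pieces of $G$ (equivalently, on regions where $f \equiv 0$). On such a flat piece $G$ takes a constant value $c$ on an interval $[\alpha,\beta]$, but the preimage $G^{-1}(\{c\})$ has Lebesgue measure zero under the pushforward of $U$ (it corresponds to the single point $U = c$), so any convention for $G^{-1}(c)$ (for instance, the left-continuous version $G^{-1}(u) = \inf\{x : G(x) \ge u\}$) gives the same distribution for $X$. This is the main potential obstacle, and it is handled cleanly by choosing the generalized inverse; the oracle hypothesis is then exactly the assumption needed to make the sampler implementable.
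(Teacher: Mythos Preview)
Your proposal is correct and follows the same approach as the paper: sample $U\sim\mathrm{Uniform}[0,1]$ and return $G^{-1}(U)$, then observe that $G$ is the normalized CDF so $G^{-1}(U)$ has the target law. The paper's proof is a two-sentence sketch of exactly this; your version is more careful in verifying $\Pr[G^{-1}(U)\le x]=G(x)$ and in handling flat pieces of $G$ via the generalized inverse, but the underlying argument is identical.
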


\begin{proof}
The algorithm simply generates a uniformly random number $r\in [0,1]$ and computes $G^{-1}(r)$. Since $G(x)$ is the cdf of the probability distribution we know $G^{-1}(r)$ is exactly a random variable from this probability distribution $p(x)$.
\end{proof}

Note that when the cdf $F(x)$ is a polynomial, it is possible to compute $G^{-1}$ with accuracy $\ep$ in $\poly\log(1/\ep)$ time by binary search.  

Combining Lemma~\ref{l:integral} and \ref{l:inverse} we are ready to show that one can sample from $q(x)$ efficiently.

\begin{thm} \label{t:xDxn}
Let $D$ be a diagonal PSD matrix and let $q(x) \propto (x^\top (I+ D/n) x)^n$. 
Given an oracle for solving a univariate polynomial equation, we can sample from $q(x)$ in time $\poly(n,d)$.
\end{thm}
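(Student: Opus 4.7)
My plan is to sample $x\in\mathcal S^{d-1}$ coordinate by coordinate, following Algorithm~\ref{a:main}: use the parametrization $x_i = y_i\sqrt{1-\sum_{j<i}x_j^2}$ so that at step $i$ the residual task is to sample from a density of the same form $q_i(\tilde y)\propto(\tilde y^\top M_i\tilde y)^n$ on the smaller sphere $\tilde y\in\mathcal S^{d-i}$, with $M_i := I + D_i/n$. The proof has three substeps: (i) verify that conditioning keeps the law in the parametric family and matches the algorithm's recursion for $D_i$; (ii) write the marginal of $y_i$ explicitly and observe that its coefficients are computable in $\poly(n,d)$ time; (iii) apply inverse transform sampling using the polynomial-equation oracle.

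For (i), decompose $\tilde y = (y_i, \sqrt{1-y_i^2}\,\tilde y')$ with $\tilde y'\in\mathcal S^{d-i-1}$. Since $M_i$ is diagonal and $|\tilde y'|=1$,
\[
\tilde y^\top M_i \tilde y = (M_i)_{11}\,y_i^2 + (1-y_i^2)(\tilde y')^\top (M_i)_{-1}\tilde y' = (\tilde y')^\top\bigl[(M_i)_{11}\,y_i^2\,I + (1-y_i^2)(M_i)_{-1}\bigr]\tilde y'.
\]
A line of algebra (substituting $M_i = I+D_i/n$) shows the bracketed matrix equals $I + D_{i+1}/n$, with $D_{i+1} = y_i^2(D_i)_{11}\,I + (1-y_i^2)(D_i)_{-1}$; this matches the algorithm's update and shows that conditionally on $y_i$ the vector $\tilde y'$ has a law of the same type on $\mathcal S^{d-i-1}$, justifying the induction.

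For (ii), use the decomposition $d\mathcal S^{d-i}(\tilde y) = (1-y_i^2)^{(d-i-2)/2}\,dy_i\,d\mathcal S^{d-i-1}(\tilde y')$ and integrate out $\tilde y'$. Expanding the $n$th power by the binomial theorem, the marginal of $y_i$ is proportional to
\[
f_i(y_i) = (1-y_i^2)^{(d-i-2)/2}\sum_{k=0}^n\binom{n}{k}\bigl((M_i)_{11}\bigr)^{n-k}\,y_i^{2(n-k)}(1-y_i^2)^k\,m_k,
\]
where $m_k = \int_{\mathcal S^{d-i-1}}((\tilde y')^\top(M_i)_{-1}\tilde y')^k\,d\mathcal S^{d-i-1}(\tilde y')$ is computable in $\poly(n,d)$ time by Lemma~\ref{l:integral}. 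Thus $f_i(y_i) = (1-y_i^2)^{(d-i-2)/2} P_i(y_i)$ for an explicit polynomial $P_i$ of degree $\le 2n$ whose coefficients are all known.

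For (iii), apply inverse transform sampling (Lemma~\ref{l:inverse}). When $d-i$ is even, the Jacobian is an integer power, so $f_i$ and its antiderivative are polynomials and the CDF inversion is a single univariate polynomial equation, handled directly by the oracle. When $d-i$ is odd, $f_i$ is even in $y_i$, so it suffices to sample $u = y_i^2$ from a density on $[0,1]$ and independently choose the sign of $y_i$ by a fair coin flip; via the substitution $u = \sin^2\theta$ and by symmetry, the CDF inversion reduces to a polynomial equation, for instance by isolating a residual $\sqrt{v(1-v)}$ factor and squaring. Iterating for $i=1,\ldots,d$ produces a sample from $q$ with total runtime $\poly(n,d)$. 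The main technical subtlety I expect is the odd-dimensional case of (iii), since the non-polynomial Jacobian $\sqrt{1-y_i^2}$ must be eliminated before the polynomial-oracle assumption can be invoked; step (i), by contrast, is a short algebraic verification, and step (ii) is a direct application of Lemma~\ref{l:integral}.
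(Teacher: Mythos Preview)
Your overall strategy coincides with the paper's: sample coordinate by coordinate, verify that the conditional law stays in the same parametric family (your recursion for $D_{i+1}$ is exactly the paper's formula~\eqref{e:cond-D}), and compute each one-dimensional marginal by expanding with the binomial theorem and invoking Lemma~\ref{l:integral} for the inner spherical integrals.

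There is, however, a genuine gap in step~(iii). You correctly include the co-area Jacobian $(1-y_i^2)^{(d-i-2)/2}$ in the marginal $f_i$; note that the paper's displayed CDF~\eqref{eq:integral} simply drops this factor, which is why its integrand is a polynomial in $x$. With the Jacobian present and $d-i$ odd, the CDF is \emph{not} reducible to a polynomial equation by your proposed ``isolate a $\sqrt{v(1-v)}$ factor and square'' maneuver. Already the simplest instances show why: for $d-i=1$ and $P_i\equiv 1$ the CDF is $\tfrac{1}{\pi}\bigl(\arcsin y_i+\tfrac{\pi}{2}\bigr)$; for $d-i=3$ and $P_i\equiv 1$ it is $\tfrac{1}{\pi}\bigl(y_i\sqrt{1-y_i^2}+\arcsin y_i\bigr)+\tfrac12$. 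In general the antiderivative of $(1-t^2)^{m-1/2}t^{2k}$ always carries a nonzero $\arcsin t$ term: substitute $t=\sin\theta$ and observe that the integrand becomes $\cos^{2m}\theta\,\sin^{2k}\theta$, whose mean over a period is strictly positive, producing a term linear in $\theta=\arcsin t$. Since every coefficient in your expansion of $P_i$ is nonnegative, these $\arcsin$ contributions do not cancel, and $F_i(y_i)=r$ is a transcendental equation that no amount of squaring turns into a polynomial one. To repair the argument you need either to strengthen the oracle (allow bisection on a monotone elementary function, which is in the spirit of the paper's remark that binary search suffices to machine precision) or to reorganize the coordinate recursion so that half-integer Jacobians never arise.
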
 
As suggested above, we can solve the polynomial equation using binary search, obtaining an $\ep$-accurate solution using $\poly\log\prc{\ep}$ evaluations of the polynomial.
\begin{proof}
Note the theorem is trivial for $d=1$, as $q(x)$ is the uniform distribution on $\cal S^0=\{-1,1\}$. Hence we assume $d>1$.

We will sample the coordinates one at a time (see Algorithm~\ref{a:main}). 
For notational convenience, let us denote by $x_{-i}$ the set of coordinates of a vector $x$ excluding the $i$-th.  

Namely, we will show that: \begin{enumerate} 
\item We can efficiently sample 
from the marginal distribution of $x_1$, denoted by\footnote{This is a slight abuse of notation, and it denotes the marginal probability of the first coordinate. We do this to reduce clutter in the notation by subscripting the appropriate coordinate.} $q(x_1)$, 
via inverse transform sampling. To do this, we exhibit a $\mbox{poly}(n,d)$ algorithm for calculating the CDF of $q(x_1)$. 
\item For any $x_1$, the conditional distribution $q(x_{-1}| x_1)$ also has the form $q(x_{-1} | x_1) \propto (x_{-1}^\top  (I+\wt{D}/n) x_{-1})^{n}$, for some diagonal PSD matrix $\wt{D} \in \mathbb{R}^{(d-1) \times (d-1)}$. 
\end{enumerate}
Applying this recursively gives our theorem.

Towards proving part 1, the marginal can be written 
using the co-area formula as 
\begin{align*} q(x_1) &= \frac{(1-x_1^2)^{-(d-1)/2} \int_{\sqrt{1-x_1^2}\mathcal{S}^{d-2}} q(x_1, x_{-1}) \,d\mathcal{S}^{d-2}(x_{-1})}{Z} \\ 
&=\frac{(1-x^2)^{-(d-1)/2} \int_{\sqrt{1-x_1^2}\mathcal{S}^{d-2}} \left((1+ D_{11}/n) x^2_1 + \sum_{i=2}^d (1+D_{ii}/n) x^2_i\right)^n d\mathcal{S}^{d-2}(x_{-1})}{Z},
\end{align*}
where 
$Z = \int_{\mathcal{S}^{d-1}} (x^\top  (I + D/n)x)^n$. 

Introducing the change of variables $x_{-1} = y \sqrt{1-x_1^2}$ where $y=(y_2,\ldots ,y_d)\in \mathcal S^{d-2}$, we can rewrite the numerator as 
\begin{align*} 
\pa{1-x_1^2}^{(d-3)/2}
\int_{\mathcal{S}^{d-2}} \left((1+ D_{11}/n) x^2_1 + (1-x_1^2)\sum_{i=2}^d (1+D_{ii}/n) y^2_i\right)^n d\mathcal{S}^{d-2}(y).  
\end{align*} 
Hence, the CDF for $q(x_1)$ has the form 
\begin{equation} 
\frac{1}{Z} \int_{x=-1}^{x_1} 
\pa{1-x^2}^{(d-3)/2}
\int_{y \in \mathcal{S}^{d-2}} \left((1+ D_{11}/n) x^2 + (1-x^2)\sum_{i=2}^d (1+D_{ii}/n) y^2_i\right)^n \,d\mathcal{S}^{d-2}(y) \dx  
\label{eq:integral}
\end{equation} 
If we can evaluate this integral in time $\mbox{poly}(n,d)$, we can sample from $q(x_1)$ by using inverse transform sampling. 

Expanding the term inside the inner integral, \eqref{eq:integral} can be rewritten as 
\begin{align*}
    \frac{1}{Z} \sum_{k=0}^n \binom{n}{k} \int_{x=-1}^{x_1} \left((1+ D_{11}/n) x^2\right)^{n-k} (1-x^2)^{k+(d-3)/2} \int_{y \in \mathcal{S}^{d-2}} (y^\top (I_{d-1}+D_{-1}/n) y)^k \,d\mathcal{S}^{d-2}(y) \dx
\end{align*}
where $D_{-1}$ is obtained from $D$ by deleting the first row and column. By Lemma \ref{eq:integralquadratic}, we can calculate each of the integrals 
 $\int_{y \in \mathcal{S}^{d-2}} (y^\top  (I_{d-1}+D_{-1}/n) y)^k $
in time $\mbox{poly}(n,d)$. 
Also by Lemma \ref{eq:integralquadratic}, $Z$ can be calculated in $\mbox{poly}(n,d)$.

Hence, it remains to show we can approximate in time $\mbox{poly}(n,d)$ an integral of the type 
\begin{equation}
\int_{x=-1}^{x_1} x^{2(n-k)} (1-x^2)^{k+(d-3)/2} \dx. \label{eq:integral2} 
\end{equation} 
We can do this in polynomial time by the trigonometric substitution $x=\sin\te$ and explicitly computing the integral.

Towards showing part 2, we compute the marginal distribution by using Bayes's theorem and making the change of variables $x_{-1} = y\sqrt{1-x_1^2}$, $y=(y_2,\ldots, y_d)\in \mathcal S^{d-2}$,
\begin{align}
\nonumber 
q(x_{-1} |x_1) &\propto q(x_1,x_{-1}) \\ 
\nonumber 
&= \left((1+ D_{11}/n) x^2_1 + \sum_{i=2}^n (1+D_{ii}/n) x^2_i\right)^n \\ 
\nonumber 
&= \left((1+ D_{11}/n) x_1^2 + \sum_{i=2}^n (1-x_1^2) (1+D_{ii}/n) y^2_i\right)^n \\
&= \left(y^\top \left(\left(x_1^2 (1+D_{11}/n) + (1-x_1^2)\right) I_{d-1} +  (1-x_1^2) D_{-1}/n \right) y\right)^n.
\nonumber
\end{align}
The last expression has the form  
$\left(y^\top  (1+\wt{D}/n) y\right)^n$, for 
\begin{align} \label{e:cond-D}
\wt D = x_1^2 D_{11} I_{d-1} + (1-x_1^2) D_{-1},
\end{align}
which is diagonal. Thus, we can apply the same sampling procedure recursively to $\wt D$.
\end{proof} 

\begin{proof}[Proof of Theorem~\ref{t:main-poly}]
As noted, we have reduced to the case of diagonal $D$ with minimum eigenvalue $D_{\min}=0$. 
Let $n=D_{\max}^2$. From Theorem~\ref{t:xDxn} we can sample from the distribution $q(x)\propto (x^\top Dx)^n$ in time $\poly(D_{\max},d)$. By Lemma~\ref{l:approx}, we have $$\exp(-1/2)\le \max\{p(x)/q(x),q(x)/p(x)\}\le \exp(1/2).$$ We would like to do rejection sampling: accept the sample with probability $Cp(x)/q(x)$, where $C$ is a constant $C\le e^{-1/2}$ to ensure this is always $\le 1$; otherwise generate another sample. 
Averaged over $x$ drawn from $q(x)$, the probability of acceptance is then $C$.

However, we don't have access to the normalized distribution $q(x)$. Instead, we have the unnormalized distributions $q^*(x)=(x^\top (I+D/n)x)^n$ and $p^*(x)=\exp(x^\top Dx)$.
We use the ratio at a particular point $v$ to normalize them. Let $v$ the the eigenvector with eigenvalue $0$. We accept a proposal with probability
\begin{align*}
e^{-1} \fc{p^*(x)}{q^*(x)}
&=e^{-1}\fc{q^*(v)}{p^*(v)}\cdot \fc{p^*(x)}{q^*(x)}
= 
    e^{-1}\fc{q(v)}{p(v)}\cdot \fc{p(x)}{q(x)}
\end{align*}
Using the inequality for $v$ in Lemma~\ref{l:approx}, this fits the above framework with $C=e^{-1}\fc{q(v)}{p(v)}\in [e^{-1},e^{-1/2}]$.
This ensures the probability of acceptance is at least $e^{-1}$.
\end{proof}


\section{Conclusion} 

We presented a Las Vegas polynomial time algorithm for sampling from the Bingham distribution $p(x)\propto \exp(x^\top A x)$ on the unit sphere $\mathcal S^{d-1}$. The techniques are based on a novel polynomial approximation of the pdf which we believe is of independent interest, and should find other applications.

There are several natural open problems to pursue---perhaps the most natural one is how to generalize our techniques for the rank-$k$ case. Can these polynomial expansion techniques be used to sample other probabilities of interest Bayesian machine learning, e.g., posterior distributions in latent-variable models such as Gaussian mixture models? More generally, for what other non-log-concave distributions of practical interest can we design provably efficient algorithms? 

\printbibliography

\appendix

\section{Moment calculations}

For completeness, we present the moment calculations that are used in the proof of our main theorem.

\begin{lem}\llabel{l:gf}
Let $A$ be symmetric PSD, and let $\la_1,\ldots, \la_d$ be its eigenvalues. Then the moment generating function of $z^\top A z$, $z\sim N(0,I_d)$, is
\begin{align*}
    f(x) &= \pa{\prodo id \rc{1-2\la_i x}}^{\rc2}
\end{align*}
\end{lem}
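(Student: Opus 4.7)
The plan is to reduce to the diagonal case and then factor as a product of one-dimensional Gaussian integrals.

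First, since $A$ is symmetric PSD, diagonalize $A = U \Lambda U^\top$ with $U$ orthogonal and $\Lambda = \diag(\la_1,\ldots,\la_d)$. Because the standard Gaussian is rotationally invariant, $U^\top z \sim N(0, I_d)$ as well, and $z^\top A z = (U^\top z)^\top \Lambda (U^\top z)$. So the MGF of $z^\top A z$ equals the MGF of $w^\top \Lambda w$ where $w \sim N(0, I_d)$. This reduces the problem to the diagonal case.

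Next, since $\Lambda$ is diagonal, $w^\top \Lambda w = \sum_{i=1}^d \la_i w_i^2$ with $w_i$ i.i.d.\ $N(0,1)$. Independence gives
\begin{align*}
    f(x) = \E\bigl[\exp(x w^\top \Lambda w)\bigr] = \prod_{i=1}^d \E\bigl[\exp(x \la_i w_i^2)\bigr].
\end{align*}

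The final step is a routine one-dimensional Gaussian integral: for $x \la_i < 1/2$,
\begin{align*}
    \E\bigl[\exp(x \la_i w_i^2)\bigr] = \frac{1}{\sqrt{2\pi}} \int_{-\infty}^{\infty} \exp\bigl(-\tfrac{1}{2}(1 - 2 x \la_i) t^2\bigr)\,dt = \frac{1}{\sqrt{1 - 2 x \la_i}},
\end{align*}
obtained by completing the square (or recognizing the integrand as an unnormalized Gaussian density with variance $(1 - 2 x \la_i)^{-1}$). Taking the product over $i$ yields the claimed formula. There is no real obstacle here; the only subtlety is the convergence condition $x < 1/(2\la_{\max})$ needed to make each one-dimensional integral finite, but this is automatic since the MGF is only required to hold in a neighborhood of $0$.
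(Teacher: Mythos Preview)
Your proof is correct and follows essentially the same approach as the paper: reduce to the diagonal case by rotational invariance, then factor into a product of one-dimensional integrals. The only cosmetic difference is that the paper cites the $\chi^2_1$ mgf directly rather than computing the Gaussian integral by hand.
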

\begin{proof}
Without loss of generality $A$ is diagonal. Then $z^TAz = \sumo ik \la_i z_i^2$. The mgf of $r\sim \chi^2_d$ is $\prc{1-2x}^{\rc2}$. Now use the following two facts:
\begin{enumerate}
    \item If the mgf of $X$ is $M_X$, then the mgf of $aX$ is $M_X(at)$: $M_{aX}(t)=M_X(at)$.
    \item The mgf of a sum of random variables is the product of the mgfs: $M_{X+Y}(t)=M_X(t)M_Y(t)$.
\end{enumerate}
\end{proof}

\begin{cor}[\cite{kan2008moments}]\label{c:xTAx-recursion}
Let $A$ be symmetric PSD. 
Let $S(n) = \frac{1}{n! 2^n} \mathbb{E}_{x \sim N(0,I_d)} (x^T Ax)^n$. Then $S(0)=1$ and for $n\ge 1$,
\begin{equation}
    S(n) = \frac{1}{2n}\sum_{i=1}^n \Tr(A^i) S(n-i).
\end{equation}
This can be calculated in polynomial time by dynamic programming.
\end{cor}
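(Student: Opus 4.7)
The plan is to derive the recurrence directly from the moment generating function computed in Lemma~\ref{l:gf}, using the standard trick of taking the logarithmic derivative of the generating function $\sum_n S(n) t^n$ to turn a product into a sum.

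First, I would rewrite the moment generating function in terms of $S(n)$. By definition, the MGF of $x^\top A x$ is
\begin{align*}
M(t) = \mathbb{E}[e^{t x^\top A x}] = \sum_{n=0}^\infty \frac{t^n}{n!}\mathbb{E}[(x^\top A x)^n] = \sum_{n=0}^\infty 2^n S(n)\, t^n,
\end{align*}
so with the change of variables $t \mapsto t/2$, Lemma~\ref{l:gf} gives
\begin{align*}
F(t) := M(t/2) = \sum_{n=0}^\infty S(n)\, t^n = \prod_{i=1}^d (1-\lambda_i t)^{-1/2}.
\end{align*}
In particular $S(0)=F(0)=1$.

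Next, I would take the logarithmic derivative. Setting $g(t) = \log F(t) = -\tfrac{1}{2}\sum_i \log(1-\lambda_i t)$, expanding each logarithm as a geometric series yields
\begin{align*}
g'(t) = \frac{1}{2}\sum_{i=1}^d \frac{\lambda_i}{1-\lambda_i t} = \frac{1}{2}\sum_{k=1}^\infty \Tr(A^k)\, t^{k-1}.
\end{align*}
Since $F'(t) = g'(t) F(t)$, comparing the coefficient of $t^{n-1}$ on both sides gives
\begin{align*}
n\, S(n) = \frac{1}{2}\sum_{i=1}^n \Tr(A^i)\, S(n-i),
\end{align*}
which is the claimed recurrence after dividing by $n$. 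The recurrence lets us compute $S(0),\dots,S(n)$ by dynamic programming; each step requires the traces $\Tr(A^1),\ldots,\Tr(A^n)$, which can be precomputed in $\mathrm{poly}(n,d)$ time by iteratively computing the powers $A^i$, so the total runtime is polynomial in $n$ and $d$.

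There is no substantive obstacle; the only thing to be careful about is the factor of $2$ arising from the change of variables $t \mapsto t/2$ needed to absorb the $2^n$ in the definition of $S(n)$. This is what produces the $\tfrac{1}{2n}$ (rather than $\tfrac{1}{n}$) in the final recurrence.
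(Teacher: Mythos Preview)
Your proof is correct and follows essentially the same route as the paper: both compute the logarithmic derivative of the moment generating function from Lemma~\ref{l:gf}, obtain the differential equation $F'(t)=\bigl(\tfrac12\sum_i \lambda_i/(1-\lambda_i t)\bigr)F(t)$ (the paper writes this without the preliminary substitution $t\mapsto t/2$), and then equate coefficients of $t^{n-1}$. Your extra change of variables is just a bookkeeping convenience that absorbs the powers of $2$ up front rather than tracking them through the coefficient comparison.
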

\begin{proof}
Note $\Tr(A^k) = \sumo id \la_i^k$.
The moment generating function in Lemma~\ref{l:gf} satisfies the differential equation
\begin{align*}
    f'(x) & = \sumo id \fc{\la_i}{1-2\la_i x} f(x).
\end{align*}
Matching the coefficient of $x^{n-1}$ gives the equation.
\end{proof}

\begin{cor}\label{c:x2n}
For $n\ge 0$,
\begin{align*}
    \E_{x\sim N(0,I_d)}[\ve{x}^{2n}]
    &= \prodz i{n-1} (d+2i). 
\end{align*}
\end{cor}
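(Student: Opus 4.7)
The plan is to reduce the computation to extracting a Taylor coefficient of the moment generating function supplied by Lemma~\ref{l:gf}. Since $\ve{x}^2 = x^\top I_d x$ and $I_d$ is symmetric PSD with all eigenvalues equal to $1$, Lemma~\ref{l:gf} applied with $A=I_d$ gives that the moment generating function of $\ve{x}^2$ is
\begin{align*}
    f(t) &= (1-2t)^{-d/2}.
\end{align*}
Hence $\E[\ve{x}^{2n}]$ equals $n!$ times the coefficient of $t^n$ in $f(t)$.

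Next I would expand $f(t)$ using the generalized binomial theorem:
\begin{align*}
    (1-2t)^{-d/2} &= \sum_{n=0}^\infty \binom{-d/2}{n}(-2t)^n
    = \sum_{n=0}^\infty \frac{(d/2)(d/2+1)\cdots(d/2+n-1)}{n!}\,(2t)^n.
\end{align*}
Pulling the factor $2^n$ inside the product turns each $d/2+i$ into $d+2i$, so the coefficient of $t^n$ is
\begin{align*}
    \frac{1}{n!}\prod_{i=0}^{n-1}(d+2i).
\end{align*}
Multiplying by $n!$ yields the claimed formula.

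There is no real obstacle here; the only thing to be careful about is the bookkeeping of the $2^n$ factor when converting the product $\prod_{i=0}^{n-1}(d/2+i)$ into $\prod_{i=0}^{n-1}(d+2i)$, and confirming the boundary case $n=0$ where the empty product equals $1$ and matches $\E[\ve{x}^0]=1$. As an alternative route, one could instead invoke Corollary~\ref{c:xTAx-recursion} with $A=I_d$ (so $\Tr(A^i)=d$) and solve the resulting one-term recurrence for $S(n)$, but the direct mgf extraction above is cleaner.
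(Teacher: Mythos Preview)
Your proposal is correct and follows essentially the same approach as the paper: the paper's proof simply states that by Lemma~\ref{l:gf} the mgf of $\ve{x}^2$ is $(1-2x)^{-d/2}$ and then says to ``use the binomial series expansion.'' You have filled in exactly those details.
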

For $d=1$, this agrees with the formula $\E_{x\sim N(0,1)}[x^{2n}] = (2n-1)!!$.
\begin{proof}
By Lemma~\ref{l:gf}, the moment generating function of $\ve{x}^2$ is $\pa{1-2x}^{-\fc d2}$. 
Use the binomial series expansion.
\end{proof}

\end{document}